
\typeout{IJCAI-19 Instructions for Authors}


\documentclass{article}
\pdfpagewidth=8.5in
\pdfpageheight=11in
\usepackage{ijcai19}

\usepackage{times}
\usepackage{soul}
\usepackage{url}
\usepackage[hidelinks]{hyperref}
\usepackage[utf8]{inputenc}
\usepackage[small]{caption}
\usepackage{graphicx}
\usepackage{amsmath}
\usepackage{booktabs}
\urlstyle{same}

\newcommand{\tuple}[1]{\ensuremath{\left \langle #1 \right \rangle }}
\usepackage{amsmath}
\usepackage{amsthm}
\usepackage{amssymb}
\usepackage{color, colortbl}
\definecolor{LightCyan}{rgb}{0.88,1,1}

\usepackage[usenames,dvipsnames]{xcolor}
\usepackage[ruled,vlined,linesnumbered]{algorithm2e}

\usepackage{pifont}
\newcommand{\cmark}{\ding{51}}%
\newcommand{\xmark}{\ding{55}}%

\usepackage{rotating}
\usepackage{multirow}
\usepackage{xspace}
\newcommand{\commentout}[1]{ }

\newcommand{\history}{past-conflicts\xspace}

%
%
\newboolean{showcomments}
\setboolean{showcomments}{true}

\ifthenelse{\boolean{showcomments}}
  {\newcommand{\nb}[3]{
  {\color{#2}\small\fbox{\bfseries\sffamily\scriptsize#1}}
  {\color{#2}\sffamily\small$\triangleright~$\textit{\small #3}$~\triangleleft$}
  }
  }
  {\newcommand{\nb}[3]{}
  }

\usepackage{acronym}
\acrodef{SOC}{sum of costs}
\acrodef{SIPP}{Safe interval path planning}
\acrodef{CBS}{Conflict-based search}
\acrodef{CCBS}{Continuous-time conflict-based search}
\acrodef{MAPF}{Multi-Agent Pathfinding}
\acrodef{ICTS}{Increasing Cost Tree Search}
\acrodef{MCCBS}{Multi-Constraint CBS}
\acrodef{CT}{Constraint Tree}

\newcommand{\ccbs}{\ac{CCBS}\xspace}
\newcommand{\ct}{\ac{CT}\xspace}
\newcommand{\sipp}{\ac{SIPP}\xspace}
\newcommand{\astar}{A$^*$\xspace}
\newcommand{\mapfr}{\ac{MAPF}$_R$\xspace}
\newcommand{\mapf}{\ac{MAPF}\xspace}
\newcommand{\const}{\textit{constraints}\xspace}

\newcommand\blfootnote[1]{%
  \begingroup
  \renewcommand\thefootnote{}\footnote{#1}%
  \addtocounter{footnote}{-1}%
  \endgroup
}

\newtheorem{definition}{Definition}
\newtheorem{lemma}{Lemma}
\newtheorem{theorem}{Theorem}





\title{Multi-Agent Pathfinding with Continuous Time}


\author{
Anton Andreychuk$^1,^3$
\and
Konstantin Yakovlev$^1,^2$\and
Dor Atzmon$^4$\And
Roni Stern$^4$
\affiliations
$^1$Federal Research Center ``Computer Science and Control'' of Russian Academy of Sciences\\
$^2$National Research University ``Higher School of Economics''\\
$^3$Peoples’ Friendship University of Russia (RUDN University)\\
$^4$Ben-Gurion University of the Negev
\emails
andreychuk@mail.com, 
yakovlev@isa.ru,
\{dorat,sternron\}@post.bgu.ac.il
}

\begin{document}

\maketitle

\begin{abstract}
      \ac{MAPF} is the problem of finding paths for multiple agents such that every agent reaches its goal and the agents do not collide. Most prior work on \mapf was on grids, assumed agents' actions have uniform duration, and that time is discretized into timesteps. 
      We propose a \mapf algorithm that does not rely on these assumptions, is complete, and provides provably optimal solutions. This algorithm is based on a novel adaptation of \sipp, a continuous time single-agent planning algorithm, and a modified version of \ac{CBS}, a state of the art multi-agent pathfinding algorithm. We analyze this algorithm, discuss its pros and cons, and evaluate it experimentally on several standard benchmarks.\blfootnote{Camera-ready version of the paper as submitted to IJCAI'19}
\end{abstract}

\section{Introduction}
\acresetall

\ac{MAPF} is the problem of finding paths for multiple agents such that every agent reaches its goal and the agents do not collide. \ac{MAPF} has topical applications in warehouse management~\cite{wurman2008coordinating}, airport towing~\cite{morris2016planning}, autonomous vehicles, robotics~\cite{veloso2015cobots}, and digital entertainment~\cite{ma2017feasibility}. 
While finding a solution to \ac{MAPF} can be done in polynomial time~\cite{kornhauser1984coordinating}, 
solving \ac{MAPF} optimally is NP Hard under several common assumptions~\cite{surynek2010optimization,yu2013structure}.

Nevertheless, AI researchers in the past years have made substantial progress in finding optimal solutions to a growing number of scenarios and for over a hundred agents~\cite{sharon2015conflict,sharon2013increasing,wagner2015subdimensional,standley2010finding,felner2018adding,ICTAIpicat,yu2013structure}. 
However, most prior work assumed that 
(1) time is discretized into time steps, 
(2) the duration of every action is one time step, 
and (3) in every time step each agent occupies exactly a single location. These simplifying assumptions limit the applicability of \mapf algorithm in real-world applications. In fact, most prior work performed empirical evaluation only on 4-connected grids.

We propose \ccbs, a \ac{MAPF} algorithm that does not rely on any of these assumptions and is sound, complete, and optimal. \ccbs is based on a customized version of \ac{SIPP}~\cite{phillips2011sipp}, a continuous-time single-agent pathfinding algorithm, and an adaptation of \ac{CBS}~\cite{sharon2015conflict}, a state-of-the-art multi-agent pathfinding algorithm. 

\ac{CCBS} relies on the ability to accurately detect collisions between agents and to compute the \emph{safe intervals} of each agent, that is, the minimal time an agent can start to move over an edge without colliding. In our experiments, we used a closed-loop formulae for collision detection and a discretization-based approach for safe-interval computations. The results show that \ac{CCBS} is feasible and outputs lower cost solutions compared to previously proposed algorithms. However, since \ac{CCBS} considers agents' geometry and continuous time, it can be slower than grid-based solutions, introducing a natural plan cost versus planning time tradeoff.

\begin{table}[t]
\centering
\resizebox{\columnwidth}{!}{
\begin{tabular}{@{}p{5cm}cccccc@{}}
\toprule
\multicolumn{1}{c}{}                                           & \multicolumn{3}{c}{Actions} & Agent     &      &       \\ \midrule
\multicolumn{1}{c}{}                                           & N.U.         & Cont.         & Ang. & Vol. & Opt. & Dist. \\ \midrule
CBS-CL~\cite{walker2017using}      & \cmark    & \xmark    & \xmark    & \xmark    & \xmark    & \xmark     \\
M*~\cite{wagner2015subdimensional} & \cmark            & \xmark             & \cmark    & \xmark    & \cmark    & \xmark     \\
E-ICTS~\cite{walker2018extended} & \cmark            & \xmark             & \cmark    & \cmark    & \cmark    & \xmark     \\
MCCBS~\cite{li2019multi}                                                          & \xmark            & \xmark             & \xmark    & \cmark    & \cmark    & \xmark     \\
POST-MAPF~\cite{ma2017multiAgent}                                                      & \cmark            & \cmark             & \cmark    & \cmark    & \xmark    & \xmark     \\
ORCA~\cite{snape2011hybrid} & \cmark            & \cmark             & \cmark    & \cmark    & \xmark    & \cmark     \\
ALAN~\cite{godoy2018alan} & \cmark            & \cmark             & \cmark    & \cmark    & \xmark    & \cmark     \\
dRRT*~\cite{dobson2017scalable} & \cmark            & \cmark             & \cmark    & \cmark    & \xmark    & \cmark     \\
AA-SIPP($m$)~\cite{yakovlev2017anyAngle}
 & \cmark            & \cmark             & \cmark    & \cmark    & \xmark    & \xmark     \\
TP-SIPPwRT~\cite{LiuAAMAS19}                                                        & \cmark            & \cmark             & \cmark    & \cmark    & \xmark    & \xmark     \\
\rowcolor{LightCyan}
CCBS                                                           & \cmark            & \cmark             & \cmark    & \cmark    & \cmark    & \xmark     \\

 \bottomrule
\end{tabular}
}
\vspace{-0.3cm}
\caption{Overview: \ac{MAPF} research beyond the basic setting.}
\label{tab:related-work}
\vspace{-0.3cm}
\end{table}

We are not the first to study \ac{MAPF} beyond its basic setting. However, to the best of our knowledge, \ac{CCBS} is the first \ac{MAPF} algorithm that can handle non-unit actions duration, continuous time, non-grid domains, agents with a volume, and is still optimal and complete.  
Table~\ref{tab:related-work} provides an overview of how prior work on \ac{MAPF} relates to \ac{CCBS}. A more detailed discussion is given in the related work section. 

\section{Problem Definition}

We consider cooperative pathfinding for non-point translating non-rotating agents in 2D workspaces. All agents are assumed (1) to be of the same shape and size, (2) to move with the same (constant) speed, and (3) to be constrained to the same roadmap of the environment, i.e. there is a single graph $G=(V, E)$ whose vertices correspond to  locations agents can occupy (and wait in them) and edges correspond to straight-line trajectories the agents traverse when moving from one location to the other. Inertial effects are neglected and agents start/stop moving instantaneously. Duration of a move is translation speed times the length of the edge. Duration of a wait action can be any positive real number. 
Prior work referred to this setting as \mapfr~\cite{walker2018extended}. 

Note that the \ccbs algorithm we propose in this work is not limited to assumptions (1), (2), and (3) described above, e.g., it can handle agents moving with different speeds, using different roadmaps, and having complex shapes and sizes. We make these assumptions only to simplify exposition. 

A \emph{plan} for an agent $i$ is a sequence of actions $\pi_i$ such that if $i$ executes this sequence of actions 
then it will reach its goal. A set of plans, one for each agent, is called a \emph{joint plan}. A solution to a \mapfr is a joint plan such that if all agents start to execute their respective plans at the same time, then all agents will reach their goal locations without colliding with each other. In this work we focus on finding cost-optimal solutions. 
To define cost-optimality of a \mapfr solution, we first define the \emph{cost} of a plan $\pi_i$ to be the sum of the durations of its constituent actions. Several forms of solution cost-optimality have been discussed in \ac{MAPF} research. Most notable are \emph{makespan} and \emph{\ac{SOC}}, where the makespan is the max over the costs of the constituent plans and \ac{SOC} is their sum. The problem we address in this work is to find a solution to a given \mapfr problem that is optimal w.r.t its \ac{SOC}, that is, no other solution has a lower \ac{SOC}.

\section{CBS with Continuous Times}
Next, we introduce \ccbs and provide relevant background. 
\ac{CBS}~\cite{sharon2015conflict} is a complete and optimal \ac{MAPF}
solver,  designed for standard \ac{MAPF}, i.e., where time is discretized and all actions have the same duration. 
It solves a given \ac{MAPF} problem by finding plans for each agent separately, detecting \emph{conflicts} between these plans, and resolving them by replanning for the individual agents subject to specific \emph{constraints}. 
The typical \ac{CBS} implementation considers two types of conflicts: a vertex conflict and an edge conflict. A vertex conflict between plans $\pi_i$ and $\pi_j$ is defined by a tuple $\tuple{i,j,v,t}$ and means that according to these plans agents $i$ and $j$ plan to occupy $v$ at the same time $t$. 
An edge conflict is defined similarly by a tuple $\tuple{i,j,e,t}$, 
and means that according to $\pi_i$ and $\pi_j$ both agents plan to traverse the edge $e\in E$ at the same time, from opposite directions.

A \ac{CBS} vertex-constraint is defined by a tuple $\tuple{i,v,t}$ and means that agent $i$ is prohibited from occupying vertex $v$ at $t$. 
A \ac{CBS} edge-constraint is defined similarly by a tuple $\tuple{i,e,t}$, where $e\in E$. To guarantee completeness and optimality, \ac{CBS} runs two search algorithms: a low-level search algorithm that finds paths for individual agents subject to a given set of constraints, and a high-level search algorithm that chooses which constraints to add. 

\paragraph{\ac{CBS}: Low-Level Search.}
The low-level search in \ac{CBS} can be any pathfinding algorithm that can find an optimal plan for an agent 
that is consistent with a given set of \ac{CBS} constraints. To adapt single-agent pathfinding algorithms such as \astar{} to consider \ac{CBS} constraints, 
the search space must also consider the time dimension since a \ac{CBS} constraint $\tuple{i,v,t}$ 
blocks location $v$ only at a specific time $t$. 
For \ac{MAPF} problems, where time is discretized, this means that a state in this single-agent search space is a pair $(v,t)$, representing that the agent is in location $v$ at time $t$. Expanding such a state generates states 
of the form $(v',t+1)$, where $v'$ is either equal to $v$, representing a wait action, 
or equal to one of the locations adjacent to $v$. States generated by actions that violate the given set of \ac{CBS} constraints, are pruned. Running \astar{} on this search space will return the lowest-cost path to the agent's goal that is consistent with the given set of \ac{CBS} constraints, as required. This adaptation of textbook \astar{} is very simple, and indeed most papers on \ac{CBS} do not report it and just say that the low-level search of \ac{CBS} is \astar{}. 

\paragraph{\ac{CBS}: High-Level Search.}
The high-level search algorithm in \ac{CBS} works on a \ct, which is a binary tree, in which each node represents a set of \ac{CBS} constraints imposed on the agents and 
a joint plan consistent with these \ac{CBS} constraints. For a \ct node $N$, we denote its constraints and joint plan by $N.\const$ and $N.\Pi$, respectively. 
A \ct node $N$ is generated by first setting its constraints ($N.\const$) and then 
computing $N.\Pi$ by running the low-level solver, which finds a plan for each agent subject to the constraints relevant to it in $N.\const$. If $N.\Pi$ does not contain any conflict, then $N$ is a goal. Else, one of the \ac{CBS} conflicts $\tuple{i,j,x,t}$ (where $x$ is either a vertex or an edge) in $N.\Pi$ is chosen and two new \ct nodes are generated $N_i$ and $N_j$. Both nodes have the same set of constraints as $N$, plus a new constraint
: $N_i$ adds the constraint $\tuple{i,x,t}$
and $N_j$ adds the constraint $\tuple{j,x,t}$. \ac{CBS} searches the \ct in a best-first manner, expanding in every iteration the \ct node $N$ with the lowest-cost joint plan.

\subsection{From \ac{CBS} to \ccbs}

\ccbs follows the \ac{CBS} framework. 
The main differences between \ccbs and \ac{CBS} are:
\begin{itemize}
    \item To detect conflicts, \ccbs uses a geometry-aware collision detection mechanism.
    \item To resolve conflicts, \ccbs uses a geometry-aware unsafe-interval detection mechanism.
    \item \ccbs adds constraints over pairs of actions and time ranges, instead of location-time pairs.
    \item For the low-level search, \ccbs uses a version of \sipp adapted to handle \ccbs constraints. 
\end{itemize}
\noindent Next, we explain these differences in details. 

\subsubsection{Conflict Detection in \ccbs}

In \ccbs, agents can have any geometric shape 
and agents' actions can have any duration. 
Therefore, conflicts can occur between agents traversing different edges, as well as vertex-edge conflicts, which occurs when an agent moving along an edge collides with an agent waiting at a vertex~\cite{li2019multi}. 
Thus, a \ccbs conflict is defined as conflicts between \emph{actions}. Formally, a \ccbs conflict is a tuple $\tuple{a_i, t_i, a_j, t_j}$, representing that if agent $i$ executes $a_i$ at  $t_i$ 
and agent $j$ executes $a_j$ at $t_j$ then they will collide.

Collision detection for arbitrary-shaped moving objects is a non-trivial problem extensively studied in computer graphics, computational geometry and robotics~\cite{jimenez20013d}. For the setting used in our experiments, there is a fast closed-loop collision detection mechanism~\cite{guy2015}. 
In general, \ccbs as a MAPF algorithm is agnostic to the exact collision detection procedure used. 

\subsubsection{Resolving Conflicts in \ccbs}

\begin{figure}
    \centering
    \includegraphics[width=0.6\columnwidth]{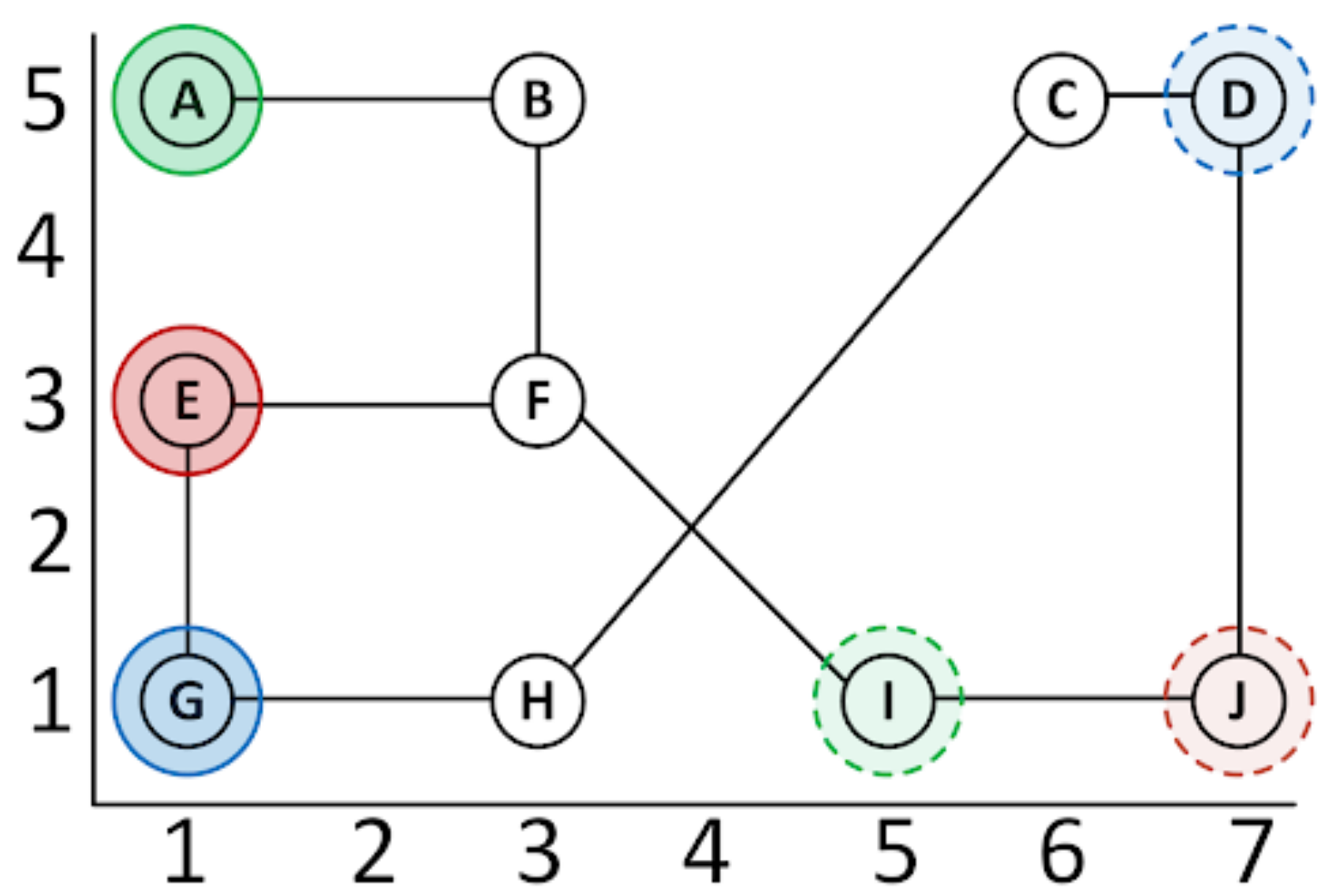}
    \caption{Our running example: a \mapfr problem with 3 agents.}
    \label{fig:example}
\end{figure}

The high-level search in \ccbs runs a best-first search like  regular \ac{CBS}, selecting in every iteration a leaf $N$ of the \ct that has the joint plan with the smallest cost. 
The collision detection mechanism checks if $N$ is a goal node. 
If not, the high-level search expands $N$ by choosing one of the \ccbs conflicts 
$\tuple{a_i, t_i, a_j, t_j}$ detected in $N.\Pi$ and generating two new \ct nodes, $N_i$ and $N_j$. To compute the constraints to add to $N_i$ and $N_j$, \ccbs computes 
for each action its \emph{unsafe intervals} w.r.t the other action. 
The unsafe interval of action $a_i$ w.r.t. action $a_j$ is 
the maximal time interval starting from $t_i$ in which 
performing $a_i$ will create a collision with performing $a_j$ at time $t_j$. 
\ccbs adds to $N_i$ the constraint that agent $i$ cannot perform $a_i$ in its unsafe interval w.r.t to $a_j$, and adds to $N_j$ the constraint that agent $j$ cannot perform $a_j$ in its unsafe interval w.r.t to $a_i$. 
For example, consider the \mapfr problem illustrated in Figure~\ref{fig:example}. 
There are three disk-shaped ($r$=0.5) agents, $a_1$, $a_2$, and $a_3$, 
where $A$, $E$, and $G$ are their starts
and $I$, $J$, and $D$ are their goals, respectively. 
In the first \ct node,
the plans for $a_2$ and $a_3$ are $E\rightarrow F\rightarrow I\rightarrow J$ 
and $G\rightarrow H\rightarrow C\rightarrow D$, respectively. 
There is a conflict between the second actions in both plans, i.e., $\tuple{(F,I),2,(H,C),2}$ is a conflict. 
The unsafe interval for $a_2$ is $[2,3.74)$ and for $a_3$ is $[2, 3.31)$. \ccbs will generate two new \ct nodes: one with the additional constraint $\tuple{a_2, (F,I), [2,3.74)}$ 
and one with the additional constraint $\tuple{a_3, (H,C), [3,3.31)}$. 

\subsubsection{\sipp for Planning with \ccbs Constraints} 
The low-level solver of \ccbs is based on \sipp, which is a single-agent pathfinding 
algorithm designed to handle continuous time and moving obstacles. 
\sipp computes for every location $v\in V$ a set of \emph{safe intervals}, 
where a safe interval is a maximal contiguous time interval in which an agent can stay or arrive at $v$ without colliding with a moving obstacle. A safe interval is \emph{maximal} in the sense that extending it to either direction yields a collision. The key observation in \sipp is that the number of safe intervals is finite and often small. To find a plan, \sipp runs an \astar{}-based algorithm, searching in the space of (location, safe interval) pairs. \sipp is complete and returns time-minimal solutions. 

To adapt \sipp to be used as a low-level solver of \ccbs, we modify it so actions that violate the constraints are prohibited, as follows.  
Let $\tuple{i, a_i, [t_i, t^u_i)}$ be a \ccbs constraint imposed over agent $i$. To adapt \sipp to plan for agent $i$ subject to this constraint, we distinguish between two cases: 

\noindent \textbf{$\mathbf{a_i}$ is a move action.} 
Let $v$ and $v'$ be the source and target destinations of $a_i$. 
If the agent arrives to $v$ in time step $t\in [t_i, t^u_i)$ then we remove the action that moves it from $v$ to $v'$ at time $t$, and add an action that represents waiting at $v$ until $t^u_i$ and then moving to $v'$. 

\noindent \textbf{$\mathbf{a_i}$ is a wait action.} 
Let $v$ be the vertex in which the agent is waiting in $a_i$. We forbid the agent from waiting at $v$ in the range $[t_i, t^u_i)$ by splitting the safe intervals of $v$ accordingly. For example, if $v$ is associated with a single safe interval: $[0,\infty)$, 
then splitting it to two intervals $[0, t_i]$ and $[t^u_i,\infty)$.\footnote{Note that the first interval includes $t_i$. This is because while the agent cannot perform wait action in $t_i$, it can perform a move action.} 

To demonstrate these two modifications to \sipp, consider again the \mapfr problem in Figure~\ref{fig:example}. As mentioned above, in the root \ct node there is a conflict between $a_2$ and $a_3$, and one of the constraints added to resolve it $\tuple{a_2, (F,I), [2,3.74)}$. 
This is a constraint over a move action, and thus we replace the action that moves $a_2$ from $F$ to $I$ with an action that waits for a duration of $1.74$ in $F$ and then moves to $I$. The optimal plan for $a_2$ under this constraint is indeed to use the modified action. However, this plan has a conflict with the $a_1$ (green), which has the plan $A\rightarrow  B\rightarrow F\rightarrow I$. To resolve it, the constraint added to $a_2$ is $\tuple{a_2, (F,F), [3,4)}$. This is a wait action, and so we split safe intervals of $F$ from the default $[0,\infty)$ to two safe intervals: $[0,3]$ and $[4,\infty)$.

\begin{lemma}
Running the adapted \sipp described above with a set of \ccbs constraints $C_1,\ldots C_m$
returns the lowest-cost path that satisfies these constraints. 
\label{lem:sipp-optimal}
\end{lemma}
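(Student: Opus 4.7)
The plan is to reduce this to the known optimality of standard \sipp by arguing that the action-set and safe-interval modifications exactly characterize the constraint-compliant plans while preserving at least one cost-optimal representative of each. I would proceed by induction on the number of constraints $m$. The base case $m=0$ is the correctness of standard \sipp on continuous-time single-agent planning. For the inductive step, assuming optimality holds after the modifications induced by $C_1,\ldots,C_{m-1}$, I would add a further constraint $C_m = \tuple{i, a_i, [t_i, t_i^u)}$ and establish two invariants: (\emph{soundness}) every plan produced by the further-modified \sipp satisfies $C_m$, and (\emph{optimality}) every $C_m$-compliant plan in the previous search space has a counterpart in the new one with the same cost.

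For a move constraint on edge $(v,v')$, the modification removes the original move from $v$ at every start time in $[t_i, t_i^u)$ and introduces a composite action that waits at $v$ until $t_i^u$ and then traverses $(v,v')$. Soundness follows because $(v,v')$ can now only be initiated outside $[t_i, t_i^u)$: either before $t_i$ via the unmodified action, or at $t_i^u$ or later via either the composite action or (if the agent reaches $v$ no earlier than $t_i^u$) the original one. For optimality, any $C_m$-compliant plan that arrives at $v$ at some $t \in [t_i, t_i^u)$ and subsequently traverses $(v,v')$ must wait at $v$ for at least $t_i^u - t$; the composite action, possibly extended by \sipp's native wait handling, realizes precisely such behavior at the same cost. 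For a wait constraint at $v$, splitting the safe intervals of $v$ so that no resulting interval contains any point of the open interval $(t_i, t_i^u)$ guarantees that \sipp, whose states are $(\textit{location}, \textit{safe interval})$ pairs and whose wait actions only advance time within a single safe interval, can never keep the agent idling through the forbidden window; conversely, every compliant waiting behavior remains representable, and the footnote's inclusion of $t_i$ in the earlier sub-interval preserves exactly the arrivals at $v$ at time $t_i$ that are followed immediately by a move.

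The main obstacle is the optimality direction in the move-constraint case: I would need to verify that, for every arrival time $t \in [t_i, t_i^u)$, the single composite ``wait-then-move'' action together with \sipp's standard mechanisms captures \emph{every} optimal $C_m$-compliant continuation out of $v$, including alternatives such as moving toward a different neighbor, leaving $v$ strictly before $t_i$, or departing from $v$ within a later safe interval. Once these invariants are established for one constraint, composing them across the $m$ inductive steps is routine, since each modification restricts only outgoing transitions from a single location or narrows the safe intervals of a single vertex; the final optimality statement then follows from \sipp's existing optimality proof applied to the fully modified instance.
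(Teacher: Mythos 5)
Your proposal is correct and follows essentially the same route as the paper, which justifies the lemma by appealing to \sipp's optimality plus the observation that the adaptations prohibit only those actions that directly violate the given constraints (the paper omits the formal proof entirely). Your induction over constraints with per-modification soundness and optimality-preservation invariants is a faithful, more detailed elaboration of that same argument, and the ``main obstacle'' you flag is resolved by \sipp's standard dominance reasoning: within any safe interval of $v'$, departing at the earliest compliant time ($t_i^u$) dominates any later compliant departure, and departures targeting later safe intervals of $v'$ are already covered by \sipp's per-interval successor generation.
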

The correctness of Lemma~\ref{lem:sipp-optimal} follows from \sipp's optimality and the fact that our adaptations only prohibit moves that directly break the given \ccbs constraint. A formal proof is omitted due to space constraints. 

\subsection{Theoretical Properties}
Next, we prove that \ccbs is sound, complete, and optimal. Our analysis is based on Lemma~\ref{lem:sipp-optimal} and the notion of a \emph{sound} pair of constraints, defined by Atzmon et al.~\shortcite{atzmon2018robust}.
\begin{definition}[Sound Pair of Constraints]
A pair of constraints is sound iff in every optimal solution it holds that at least one of these constraints hold. 
\end{definition}

\begin{lemma}
For any \ccbs conflict $\tuple{a_i, t_i, a_j, t_j}$ 
and corresponding unsafe intervals $[t_i,t^u_i)$
and $[t_j,t^u_j)$
the pair of \ccbs constraints 
$\tuple{i,a_i,[t_i,t^u_i)}$ and
$\tuple{j,a_j,[t_j,t^u_j)}$ 
is a sound pair of constraints.
\label{lem:sound}
\end{lemma}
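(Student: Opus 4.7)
I would argue by contradiction. Suppose, toward a contradiction, there is an optimal solution $\Pi^*$ that violates both constraints in the pair. By the definition of a \ccbs constraint, this means that in $\Pi^*$ agent $i$ performs $a_i$ starting at some time $t'_i \in [t_i, t^u_i)$, and agent $j$ performs $a_j$ starting at some time $t'_j \in [t_j, t^u_j)$. The goal is then to show that executing $a_i$ at $t'_i$ and $a_j$ at $t'_j$ necessarily causes a collision, which contradicts the fact that $\Pi^*$ is a valid (and hence optimal) joint plan.

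The central subclaim to establish is the following: the entire axis-aligned rectangle $[t_i, t^u_i) \times [t_j, t^u_j)$ in the $(t'_i, t'_j)$-plane consists of colliding pairs, not merely the two axis slices $\{t_i\}\times[t_j,t^u_j)$ and $[t_i,t^u_i)\times\{t_j\}$ that the definitions of the unsafe intervals give us directly. I would prove this from two ingredients. First, by the maximality clause in the definition of $t^u_i$, performing $a_i$ at any $t'_i \in [t_i, t^u_i)$ collides with performing $a_j$ at exactly $t_j$; symmetrically for $t^u_j$. Second, I would invoke the standard time-shift monotonicity of the collision predicate for the setting assumed in the paper (translating rigid agents with constant speed along straight edges): if two actions collide when executed at $(s_i, s_j)$, then delaying the ``earlier'' one towards the other (i.e.\ moving $(s_i, s_j)$ in a direction that keeps both coordinates within the axis slices already shown unsafe) preserves the collision. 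Combining these, any $(t'_i, t'_j)$ inside the rectangle is reachable from the known unsafe axis slices through a sequence of small delays that preserve collision, yielding the subclaim.

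With the subclaim in hand, the contradiction is immediate: the supposed valid solution $\Pi^*$ schedules $a_i$ at $t'_i$ and $a_j$ at $t'_j$ inside the unsafe rectangle, so agents $i$ and $j$ must collide there, contradicting $\Pi^*$ being a valid joint plan. Hence no optimal solution can violate both constraints, so at least one of $\tuple{i,a_i,[t_i,t^u_i)}$ and $\tuple{j,a_j,[t_j,t^u_j)}$ must hold in every optimal solution, establishing soundness of the pair.

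The step I expect to require the most care is the subclaim about the full unsafe rectangle. The definition in the paper only pins down the two one-dimensional slices; extending to the two-dimensional rectangle needs an explicit monotonicity or time-shift property of the collision test. For the concrete geometry in the experiments (disk-shaped agents at constant speed) this follows from a short argument about relative trajectories, but in full generality \ccbs is stated as agnostic to the collision detector, so the cleanest writeup would either (i) assume a mild delay-monotonicity axiom on the collision predicate, or (ii) redefine the unsafe intervals jointly so that the rectangle property holds by construction. Either route gives the lemma, with the contradiction argument above as the top-level structure.
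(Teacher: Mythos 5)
Your proposal is correct and is essentially the paper's own argument: the paper likewise proceeds by contradiction, takes the two axis slices $\{t_i\}\times[t_j,t_j^u)$ and $[t_i,t_i^u)\times\{t_j\}$ supplied by the definitions of the unsafe intervals, shifts them in time to reach the interior point $(t_i+\Delta_i,\,t_j+\Delta_j)$, and derives the contradictory inequalities $\Delta_i<\Delta_j$ and $\Delta_j<\Delta_i$ --- which is exactly your ``whole rectangle is unsafe'' subclaim in disguise. The one refinement worth keeping from your own discussion is that the auxiliary property needed is not really delay-monotonicity but the weaker time-shift invariance of the conflict predicate (whether $a_i$ at $s_i$ conflicts with $a_j$ at $s_j$ depends only on $s_i-s_j$), which is precisely what the paper uses implicitly when it shifts both actions by $\Delta_j$ (resp.\ $\Delta_i$) and which already makes the two shifted slices cover the entire rectangle.
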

\begin{proof}

By contradiction, assume that there exists $\Delta_i\in(0, t^u_i-t_i]$ 
and $\Delta_j\in(0, t^u_j-t_j]$ 
such that perform $a_i$ at $t_i+\Delta_i$
and $a_j$ at $t_j+\Delta_j$ does not create a conflict. That is, $\tuple{a_i, t_i+\Delta_i, a_j, t_j+\Delta_j}$ is not a conflict. 

By definition, of $t^u_j$:
\begin{align*}
    \forall t\in[t_j,t^u_j): & \tuple{a_i,t_i, a_j, t} \text{~is a conflict.} \\
    \forall t\in[t_j+\Delta_j,t^u_j): & \tuple{a_i,t_i+\Delta_j, a_j, t} \text{~is a conflict.} 
\end{align*}
By definition of $\Delta_i$ and $\Delta_j$: \[ \tuple{a_i, t_i+\Delta_i, a_j,  t_j+\Delta_j} \text{~is not a conflict}
\]
Therefore, $\Delta_i<\Delta_j$. 
Similarly, by definition of $t^u_i$:
\begin{align*}
    \forall t\in[t_i,t^u_i): & \tuple{a_i,t, a_j, t_j} \text{~is a conflict.} \\
    \forall t\in[t_i+\Delta_i,t^u_i): & \tuple{a_i,t, a_j, t_j+\Delta_i} \text{~is a conflict.} 
\end{align*}
Therefore, by definition of 
$\Delta_i$ and $\Delta_j$ we have that $\Delta_j<\Delta_i$, which leads to a contradiction. 
\end{proof}

\begin{theorem}
\ccbs sound, complete, and is guaranteed to return an optimal solution. 
\label{the:optimal}
\end{theorem}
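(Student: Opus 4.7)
The plan is to follow the standard optimality-proof template for \ac{CBS}, lifted to continuous time via Lemmas~\ref{lem:sipp-optimal} and~\ref{lem:sound}. Soundness is immediate: \ccbs returns the joint plan of a \ct node only when the collision-detection mechanism certifies that it contains no conflicts, so by the definition of conflict the returned plan is collision-free and hence a valid \mapfr solution.

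For optimality, I would fix an arbitrary optimal joint plan $\pi^*$ of cost $C^*$ and maintain the invariant that throughout \ccbs's best-first search the OPEN list contains some \ct node $N$ all of whose constraints in $N.\const$ are satisfied by $\pi^*$. Such an $N$ automatically has $\text{cost}(N.\Pi)\le C^*$: since $\pi^*$ itself is a feasible joint plan for $N.\const$, applying Lemma~\ref{lem:sipp-optimal} per agent shows that the adapted \sipp low-level solver produces, for each agent $i$, a path no more expensive than $\pi^*_i$, and summing over agents gives the bound on the \ac{SOC}. The invariant holds at the root, which has no constraints. For the inductive step, when \ccbs expands a non-goal node $N$ and splits on a conflict $\tuple{a_i,t_i,a_j,t_j}$, Lemma~\ref{lem:sound} asserts that the constraint pair added to $N_i$ and $N_j$ is sound, so $\pi^*$ must satisfy at least one of them; that child inherits $N.\const$ plus a constraint satisfied by $\pi^*$, preserving the invariant.

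Since \ccbs expands \ct nodes in best-first order of joint-plan cost, and OPEN always contains an invariant-bearing node of cost at most $C^*$, the algorithm never expands a node whose cost strictly exceeds $C^*$ before reaching a goal. Thus the first goal node it expands has conflict-free joint plan of cost at most $C^*$, which together with the trivial lower bound $C^*$ on any solution's cost yields cost exactly $C^*$, establishing optimality. Completeness then follows as a corollary: whenever a \mapfr solution exists an optimal one exists, and the argument above shows \ccbs returns it.

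The main obstacle I expect is termination. Unlike textbook \ac{CBS}, in which time is discretized so only finitely many distinct constraints can ever appear, \ccbs operates in continuous time and could in principle split unsafe intervals into finer and finer pieces indefinitely. To close this gap I would argue that the set of \ct nodes whose joint-plan cost does not exceed $C^*$ is finite, by showing that each new \ccbs constraint strictly shrinks the corresponding agent's feasible action set in a way that, combined with the finiteness of $G$ and the cost ceiling $C^*$, admits only finitely many distinct constraint sets of interest. Carefully executing this finiteness argument — in particular, bounding how many disjoint unsafe intervals can coexist on a single edge before any consistent plan exceeds cost $C^*$ — is the technical crux that distinguishes the \ccbs analysis from the familiar \ac{CBS} case.
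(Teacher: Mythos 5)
Your proposal is correct and follows essentially the same route as the paper: soundness from explicit conflict detection on every candidate joint plan, and optimality from the combination of Lemma~\ref{lem:sound} (a sound constraint pair means the split $\pi(N)=\pi(N_1)\cup\pi(N_2)$ loses no valid joint plan, so some node on OPEN always admits the optimal solution $\pi^*$), Lemma~\ref{lem:sipp-optimal} (that node's cost lower-bounds $C^*$), and best-first expansion of the \ct. The one place you go beyond the paper is the termination discussion: the paper's proof does not address termination at all, silently treating the continuous-time \ct as if the usual finiteness of \ac{CBS} carried over. Your worry is legitimate --- unsafe intervals can in principle be arbitrarily short, so one cannot immediately bound the number of \ct nodes with cost at most $C^*$ --- and your sketch (bounding the number of distinct constraint sets below the cost ceiling) is the right shape of argument, but you leave it unexecuted, so on that point your write-up identifies a gap rather than closes it. Since the paper claims completeness without any such argument, your treatment is, if anything, the more honest of the two; to fully match the claimed theorem you would still need to carry out the finiteness argument you outline.
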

\begin{proof}
Soundness follows from performing conflict detection on every joint plan. 
Completeness and optimality is due to  Lemma~\ref{lem:sound} and Atzmon et al.'s~\shortcite{atzmon2018robust} proof for $k$-robust \ac{CBS}. In details, 
let $N$ be a \ct node with children $N_1$ and $N_2$,
generated by the sound pair of constraints $C_1$ and
$C_2$, respectively. $\pi(N)$ denotes all joint plans that satisfy $N.\const$. 
Since $C_1$ and $C_2$ is a sound pair of constraints, it holds that 
$\pi(N)=\pi(N_1)\cup\pi(N_2)$. 
Thus, splitting a \ct node does not loose any valid joint plan. 
Due to Lemma~\ref{lem:sipp-optimal} and the fact that the \ct is searched in a best-first manner over the cost and adding constraints can only increase cost, we are guaranteed that \ccbs returns an optimal solution. 
\end{proof}

\section{Practical Aspects}

The analysis above relies on having accurate collision detection and unsafe interval detection mechanisms. That is, a collision is detected iff one really exists, and the maximal unsafe interval is used for every given pair of actions. 
However, constructing such accurate mechanisms is not trivial. There are various ways to detect collisions between agents with volume in a continuous space, including closed-loop geometric computations as well as sampling-based approaches. See Jim{\'e}nez et al.~\shortcite{jimenez20013d} for an overview and \cite{tang2014fast} for an example of a particular collision detection procedure. For the constant velocity disk-shaped agents we used in our experiments, there exists a closed-loop accurate collision detection mechanism described in \cite{guy2015}.

Computing the unsafe interval of an action w.r.t to another action also requires analyzing the kinematics and geometry of the agents. 
However, unlike collision detection, which has been studied for many years and can be computed with a closed-loop formula in some settings, to the best of our knowledge no such closed loop formula are known for computing the unsafe intervals. 
A general method for computing unsafe intervals is to apply the collision detection mechanism multiple time, starting from $t_i$ and incrementing by some small $\Delta>0$ until the collision detection mechanism reports that the unsafe interval is done. This approach is limited in that the resulting unsafe interval may be larger than the real unsafe interval.

\subsection{Conflict Detection and Selection Heuristics}
As noted above, conflict detection in \ccbs is more complex than in regular \ac{CBS}. Indeed, in our experiments we observed that conflict detection took a significant portion of time. To speed up the conflict detection, we only checked conflicts between actions that overlap in time and may overlap geometrically. 
In addition, we implemented two heuristics for speeding up the detection process. We emphasize that these heuristics do not compromise our guarantee for soundness, completeness, and optimality.

The first heuristic we used, which we refer to as the \emph{\history heuristic}, keeps track of the number of times conflicts have been found between agents $i$ and $j$, for every pair of agents $(i,j)$.  Then, it checks first for conflicts between pair of agents with a high number of past conflicts. Then, when a conflict is found the search for conflicts is immediately halted. That found conflict is then stored in the CT node, and if that CT node will be expanded then it will generate CT nodes that are aimed to resolve this conflict. This implements the intuition that pairs of agents that have conflicted in the past are more likely to also conflict in the future.

We have found this heuristic to be very effective in practice for reducing the time allocated for conflict detection. 
Using this heuristic, however, has some limitations. Prior work has established that to intelligently choosing which conflict to resolve when expanding a CT node can have a huge impact on the size of the CT and on the overall runtime~\cite{boyarski2015icbs}. Specifically, Boyarski et al.~\shortcite{boyarski2015icbs} introduced the notion of \emph{cardinal conflicts}, 
which are conflicts that resolving them results increases the \ac{SOC}. \emph{Semi-cardinal conflicts} are conflicts that resolving them by replanning for one of the involved agents will increases the solution cost, but replanning for the other involved agents do not increase solution cost. 

For \ac{CBS}, choosing to resolve first cardinal conflicts, and then semi-cardinals, yielded significant speed ups~\cite{boyarski2015icbs}.  However, to detect cardinal and semi-cardinal conflicts, one needs to identify all conflicts, while the advantage of the heuristic is that we can halt the search for conflicts before identifying all conflicts.

 To this end, we proposed a second hybrid heuristic approach. Initially, we detect all conflicts and choose only cardinal conflicts. However, if a node $N$ does not contain any cardinal or semi-cardinal conflict, then for all nodes in the CT subtree beneath it we switch to use the \history heuristic. This hybrid approach worked well in our experiments, but fully exploring this tradeoff between fast conflict detection and smart conflict selection is a topic for future work.

\section{Experimental Results}

We conducted experiments on grids, where agents can move from the center of one grid cell 
to the center of another grid cell. 
The size of every cell is $1\times 1$, and 
the shape of every agent is an open disk whose radius equals $\sqrt{2}/4$. This specific value was chosen to allow comparison with \ac{CBS}, since it is the maximal radius that allows agents to safely perform moves in which agents follow each other.

To allow non-unit edge costs, we allowed agents to move in a single move action to every cell located in their $2^k$ neighborhood, where $k$ is a parameter~\cite{rivera2017grid}. Moving from one cell to the other is only allowed if the agent can move safely to the target cell without colliding with other agents or obstacles, where the geometry of the agents and obstacles are considered. The cost of a move corresponds to the Euclidean distance between the grid cells centers.

\subsection{Open Grids}
\begin{table}
\centering
\resizebox{0.8\columnwidth}{!}{
\begin{tabular}{@{}c|cccc|cccc@{}}
\toprule
   & \multicolumn{4}{|c}{\ac{SOC}}   & \multicolumn{4}{|c}{Success Rate} \\ \midrule
$k$ & 2     & 3    & 4    & 5    & 2    & 3    & 4    & 5    \\ \midrule
4      & 25.7  & 21.2 & 20.4 & 20.3 & 1.00 & 1.00 & 0.97 & 0.95 \\
6      & 38.2  & 31.6 & 30.5 & 30.2 & 0.99 & 1.00 & 0.88 & 0.83 \\
8      & 49.2  & 40.7 & 39.3 & 39.0 & 0.98 & 0.97 & 0.74 & 0.57 \\
10     & 61.0  & 50.5 & 48.8 & 48.4 & 0.95 & 0.94 & 0.54 & 0.42 \\
12     & 78.0  & 64.7 & -  & -  & 0.94 & 0.86 & - & - \\
14     & 90.8  & 75.3 & -  & -  & 0.88 & 0.64 & - & -  \\
16     & 102.4 & 85.2 & -  & -  & 0.76 & 0.53 & -  & -  \\
18     & 118.7 & -  & -  & -  & 0.62 & - & -  & -  \\
20     & 131.7 & -  & -  & -  & 0.46 & -  & -  & - \\\bottomrule
\end{tabular}
}
\caption{Results for \ccbs on $10\times 10$ open grid.}
\label{tab:10x10}
\vspace{-0.3cm}
\end{table}

\commentout{
\begin{table}
\resizebox{\columnwidth}{!}{
\begin{tabular}{@{}c|cccc|cccc@{}}
\toprule
   & \multicolumn{4}{|c}{\ac{SOC}}   & \multicolumn{4}{|c}{Success Rate} \\ \midrule
$k$ & 2     & 3    & 4    & 5    & 2    & 3    & 4    & 5    \\ \midrule
4      & 25.7  & 21.2 & 20.4 & 20.3 & 1.00 & 1.00 & 0.97 & 0.95 \\
5      & 31.7  & 26.2 & 25.3 & 25.1 & 0.99 & 1.00 & 0.92 & 0.90 \\
6      & 38.2  & 31.6 & 30.5 & 30.2 & 0.99 & 1.00 & 0.88 & 0.83 \\
7      & 43.8  & 36.2 & 35.0 & 34.7 & 0.98 & 0.98 & 0.84 & 0.70 \\
8      & 49.2  & 40.7 & 39.3 & 39.0 & 0.98 & 0.97 & 0.74 & 0.57 \\
9      & 55.0  & 45.5 & 43.9 & 43.5 & 0.98 & 0.97 & 0.61 & 0.50 \\
10     & 61.0  & 50.5 & 48.8 & 48.4 & 0.95 & 0.94 & 0.54 & 0.42 \\
11     & 68.7  & 56.8 & 54.9 & -  & 0.95 & 0.88 & 0.43 & 0.31 \\
12     & 78.0  & 64.7 & -  & -  & 0.94 & 0.86 & 0.32 & 0.24 \\
13     & 84.6  & 70.2 & -  & -  & 0.92 & 0.76 & 0.22 & 0.15 \\
14     & 90.8  & 75.3 & -  & -  & 0.88 & 0.64 & 0.18 & -  \\
15     & 97.1  & 80.7 & -  & -  & 0.82 & 0.58 & -  & -  \\
16     & 102.4 & 85.2 & -  & -  & 0.76 & 0.53 & -  & -  \\
17     & 108.3 & 90.4 & -  & -  & 0.71 & 0.42 & -  & -  \\
18     & 118.7 & -  & -  & -  & 0.62 & 0.32 & -  & -  \\
19     & 125.5 & -  & -  & -  & 0.56 & -  & -  & -  \\
20     & 131.7 & -  & -  & -  & 0.46 & -  & -  & - \\\bottomrule
\end{tabular}
}
\caption{Results for \ccbs on $10\times 10$ open grid.}
\label{tab:10x10}
\end{table}
}

For the first set of experiments we used a
$10\times 10$ open grid, placing agents' start and goal locations randomly. We run experiments with 4, 5, $\ldots, 20$ agents. For every number of agents we created 250 different problems. 
Each problem was solved with \ccbs with $k=2, 3, 4$, and $5$. 
An animation of a solution found by \ccbs for a problem with 13 agents and different values of $k$ can be seen in \url{https://tinyurl.com/ccbs-example}.
Table~\ref{tab:10x10} shows the results of this set of experiments. 
 Every row shows results for a different number of agents, 
as indicated on the left-most column. 
The four right-most columns show the success rate, i.e., the ratio of problems solved by the \ccbs under a timeout of 60 seconds, out of a total of 250 problems. 
Data points marked by ``-'' indicate settings where the success rate was lower than 0.4. The next four columns show the average \ac{SOC}, 
averaged over the problems solved by all \ccbs instances that had a success rate larger than 0.4. 

The results show that increasing $k$ yields solutions with lower \ac{SOC}, as expected. 
The absolute difference in \ac{SOC} when moving from $k=2$ to $k=3$ is the largest, and it grows as we add more agents. For example, for problems with 14 agents, moving from $k=2$ to $k=3$ yields an improvement of 15.5 \ac{SOC}, 
and for problems with 16 agents the gain of moving to $k=3$ is 17.2 \ac{SOC}. Increasing $k$ further exhibits a diminishing return effect, where the largest average \ac{SOC} gain when moving from $k=4$ to $k=5$ is at 0.5.

Increasing $k$, however, has also the effect of increasing the branching factor, which in turns means that path-finding becomes harder. Indeed, the success rate of $k=5$ is significantly lower compared to $k=4$. An exception to this is the transition from $k=2$ to $k=3$, where we observed a slight advantage in success rate for $k=3$ for problems with a small number of agents. For example, with 6 agents the success rate of $k=2$ is 0.99 while it is 1.00 for $k=3$. An explanation for this is that increasing $k$ also means that plans for each agent can be shorter, which helps to speed
up the search. Thus, increasing $k$ introduces a tradeoff w.r.t. the problem-solving difficulty: the resulting search space for the low-level search is shallower but wider. For denser problems, i.e., with more agents, $k=2$ is again better in terms of success rate, as more involved paths must be found by the low-level search. 

\commentout{
\begin{figure}
    \centering
    \includegraphics[width=\columnwidth]{soc-gain.PNG}
    \caption{10$\times$10 open grid, gain of using \ccbs over  \ac{CBS}.}
    \label{fig:soc-gain}
\end{figure}
Figure~\ref{fig:soc-gain} shows the tradeoff of increasing $k$ by showing the average \emph{gain}, in terms of \ac{SOC}, of using \ccbs for different values of $k$
over \ccbs with $k=2$. The $x$-axis is the number of agents, and the $y$-axis is the gain, in percentage. We only provide data points for configurations with a success rate of at least 40\%. As can be seen, increasing $k$ increases the gain over \ac{CBS}, where for $k=4$ and $k=5$ the gain was over 20\%. Increasing $k$ also decreases the success rate, and thus the data series for larger $k$ value ``disappears'' after a smaller number of agents. 
}

\begin{figure}
    \centering
    \includegraphics[width=0.6\columnwidth]{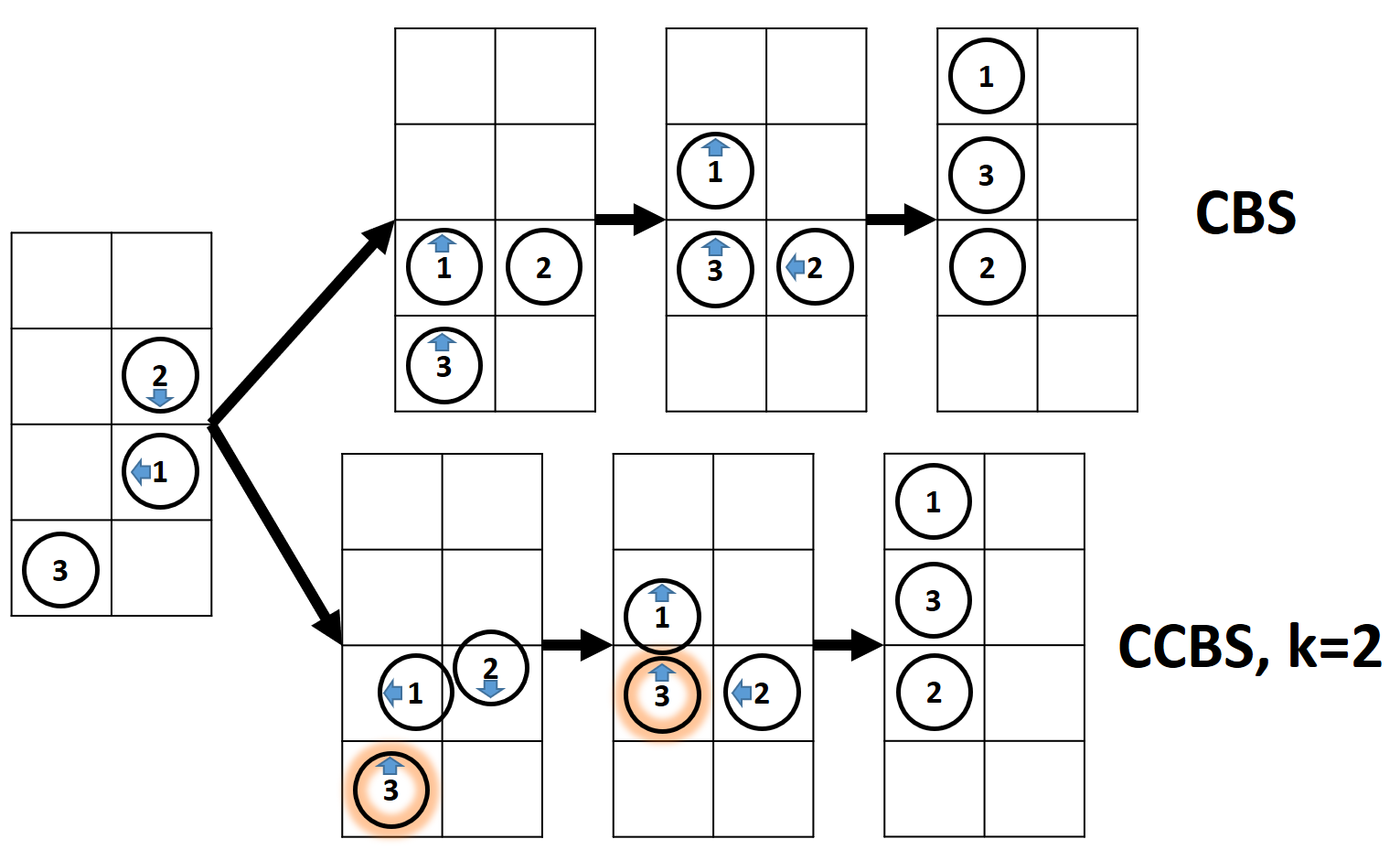}
    \caption{Example: \ccbs for $k$=2 finds a better solution than \ac{CBS}.}
    \label{fig:anton-example}
    \vspace{-0.3cm}
\end{figure}

We also compared the performance of \ccbs with $k=2$ and a standard \ac{CBS} implementation. 
\ac{CBS} was faster than \ccbs, as its underlying solver is \astar on a 4-connected grid, 
detecting collisions is trivial, and it has only unit-time wait actions. However, even for $k=2$, \ccbs is able to find better solutions, i.e., solutions of lower \ac{SOC}. This is because, an agent may start to move after waiting less than a unit time step. 
Figure~\ref{fig:anton-example} illustrates such a scenario. An animation of this example is given in \url{https://tinyurl.com/ccbs-vs-cbs2}.

\subsection{Dragon Age Maps}
\begin{table}[b]
\vspace{-0.35cm}
\centering
\resizebox{0.6\columnwidth}{!}{
    \begin{tabular}{@{}c|ccc|ccc@{}}
    \toprule
    \multicolumn{1}{l|}{} & \multicolumn{3}{c|}{\ac{SOC}} & \multicolumn{3}{c}{Success Rate} \\ \midrule
    k                     & 2      & 3      & 4      & 2         & 3         & 4        \\ \midrule
    10                    & 1,791  & 1,515  & 1,460  & 0.96      & 0.93      & 0.86     \\
    15                    & 2,598  & 2,198  & 2,118  & 0.94      & 0.84      & 0.70     \\
    20                    & 3,347  & 2,829  & 2,726  & 0.79      & 0.72      & 0.50     \\
    25                    & 4,049  & 3,426  & 3,304  & 0.58      & 0.58      & 0.32     \\ \bottomrule
    \end{tabular}
}
$\vcenter{\hbox{\includegraphics[width=0.25\columnwidth]{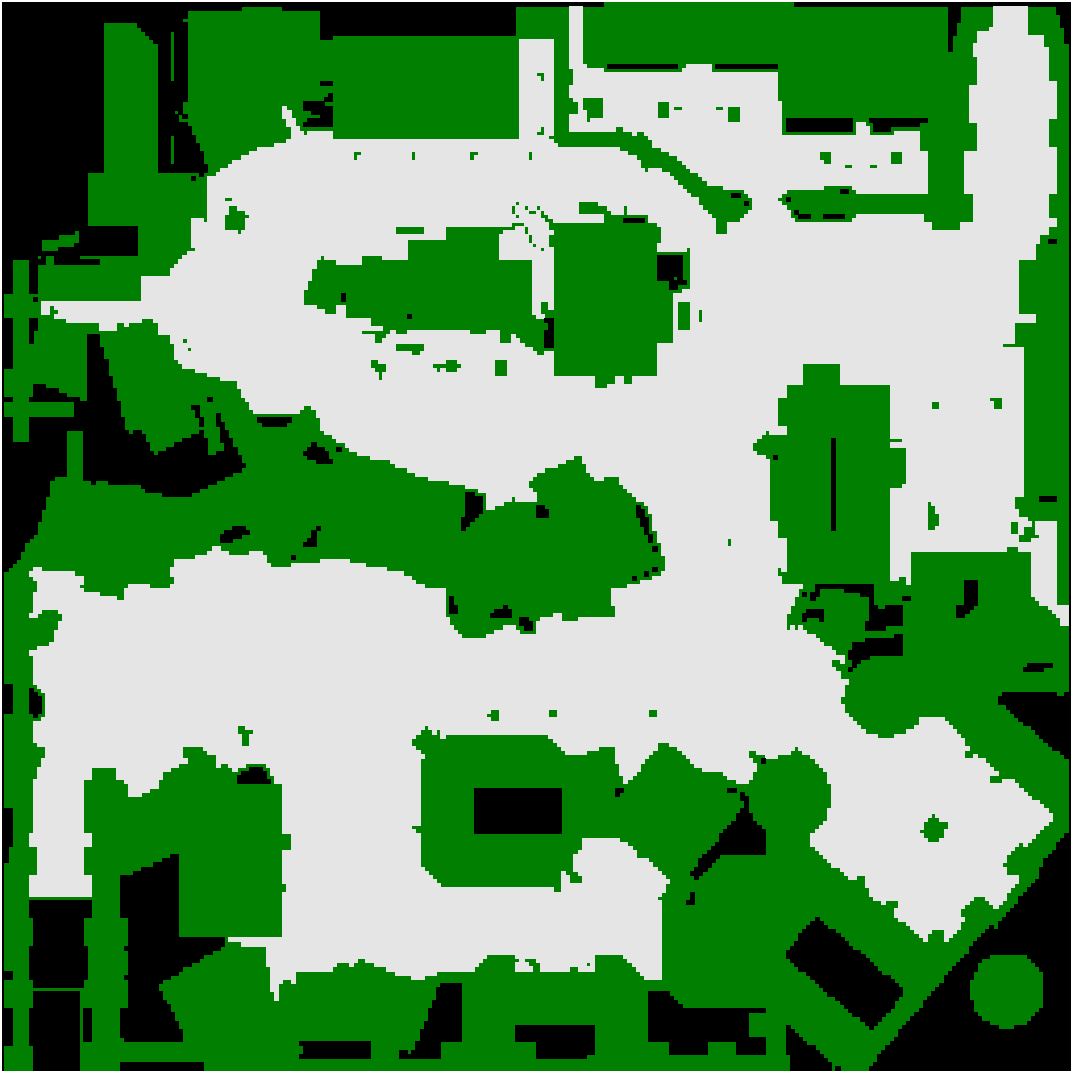}}}$

\caption{Results for \ccbs on the \texttt{den520d} DAO map.}
\label{tab:dao}
\end{table}

Next, we experimented with a larger grid, taken from the Dragon Age: Origin (DAO) game and made available in the \texttt{movingai} repository~\cite{sturtevant2012benchmarks}. We used the \texttt{den520d} map, shown to the right of Table~\ref{tab:dao}, which was used by prior work~\cite{sharon2015conflict}. 
Start and goal states were chosen randomly, and we create 250 problems for every number of agents.

\commentout{
\begin{figure}
    \centering
    \includegraphics[width=\columnwidth]{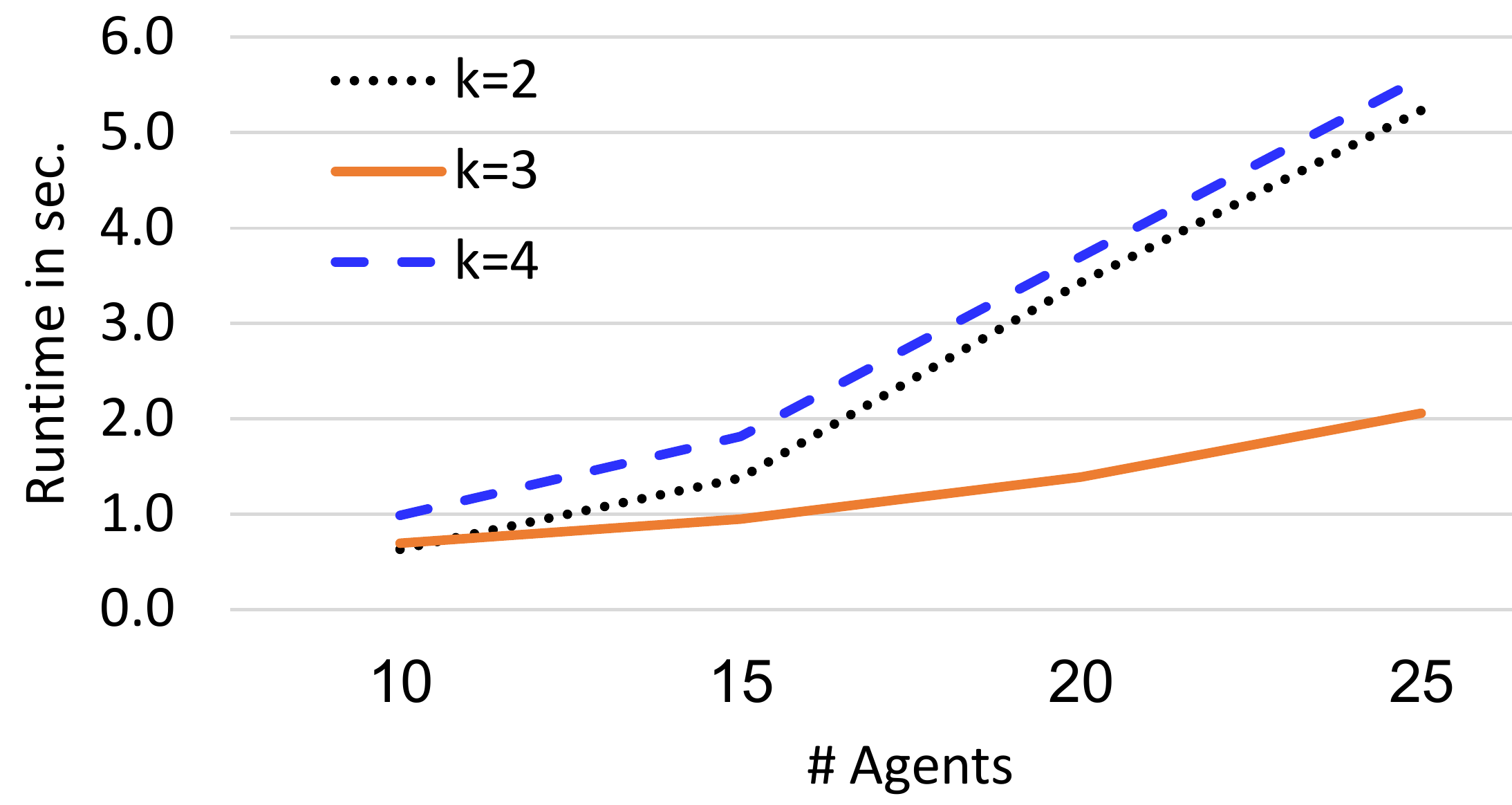}
    \caption{The average runtime for the DAO map.}
    \label{fig:dao-runtime}
\end{figure}
}
Table~\ref{tab:dao} shows the results obtained for \ccbs with $k=2,3,$ and $4$, 
in the same format as Table~\ref{tab:10x10}. The same overall trends are observed: increasing $k$ reduces the SOC and decreases the success rate. 

\subsection{Conflict Detection and Resolution Heuristics}

\begin{table}
\centering
\resizebox{0.9\columnwidth}{!}
{
\begin{tabular}{@{}lrrrrr@{}}
\toprule
  & &\multicolumn{1}{c}{Vanilla} & \multicolumn{1}{c}{PastConf} & \multicolumn{1}{c}{Cardinals} & \multicolumn{1}{c}{Hybrid} \\ \midrule

\multirow{2}{*}{
\parbox{1.5cm}{$k$=2 Agents=20}} & Success     & 0.72   & 0.74 & 0.75 & 0.82         \\
& HL exp. & 765    & 712 & 453 & 452                    \\ \midrule
\multirow{2}{*}{\parbox{1.5cm}{$k$=3 Agents=20}} & Success     & 0.67   & 0.68    & 0.75      & 0.76   \\
& HL exp.  & 152    & 141     & 51        & 47     \\ \midrule
\multirow{2}{*}{\parbox{1.5cm}{$k$=4 Agents=20}} & Success     & 0.39   & 0.4    & 0.48      & 0.50   \\
& HL exp.  & 564    & 516     & 232       & 270     \\ \midrule
\multirow{2}{*}{\parbox{1.5cm}{$k$=2 Agents=25}} & Success  & 0.39   & 0.43    & 0.38      & 0.53   \\
& HL exp. & 1762   & 1730    & 968       & 990  \\ \midrule
\multirow{2}{*}{\parbox{1.5cm}{$k$=3 Agents=25}} & Success  & 0.44   & 0.45    & 0.60      & 0.61   \\
& HL exp. & 313   & 270   & 81       & 72  \\ \bottomrule

\end{tabular}
}
\caption{Comparing conflict detection and selection methods.}    
\label{tab:heuristics}
\vspace{-0.3cm}
\end{table}

In all the experiments so far we used \ccbs with the hybrid conflict detection and selection heuristic described earlier in the paper. Here, we evaluate the benefit of using this heuristic. We compared \ccbs with this heuristic against the following: (1) Vanilla: \ccbs that chooses arbitrarily which actions to check first for conflicts, 
(2) Cardinals: \ccbs that identifies all conflicts and chooses cardinal conflicts,   
and (3) PastConf: \ccbs that uses the \history heuristic to choose where to search for conflicts first, and resolves the first conflict it finds.  

Table~\ref{tab:heuristics} shows results for the \texttt{den520d} DAO map for 20 agents with $k=$ 2, 3, and 4; and 25 agents with $k$=2 and $k$=3. For every configuration we create and run \ccbs on 1,000 instances. The table shows the success rate (the row labelled ``Success'') and the average number of high-level nodes expanded by \ccbs (``HL exp.''). The results show that the proposed hybrid heuristic 
enjoys the complementary benefits of PastConf and Cardinals, 
expanding as few \ct nodes as Cardinals 
and having the highest success rate. 

\subsection{Comparison to E-ICTS}

\begin{figure}[b]
    \centering
        \vspace{-0.3cm}
    \includegraphics[width=0.85\columnwidth]{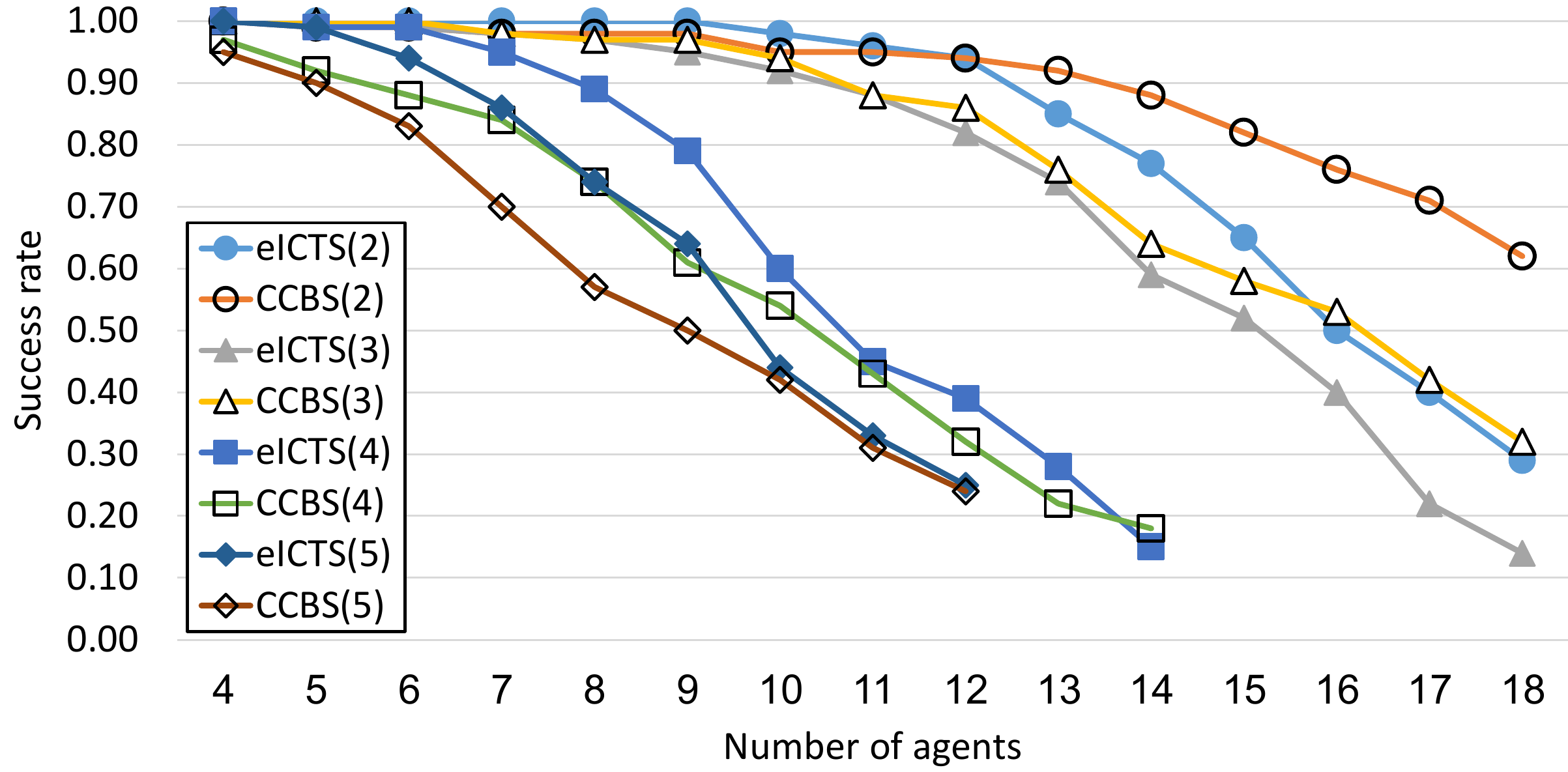}
    \caption{Success rate of \ccbs and E-ICTS in 10$\times$10 open grids.}
    \label{fig:ccbs-vs-eicts}
\end{figure}

Finally, we compared the performance of \ccbs and E-ICTS~\cite{walker2018extended}, a \mapfr algorithm based on the Increasing Cost Tree Search (ICTS) framework~\cite{sharon2013increasing}. 
E-ICTS can handle non-unit edge cost, and handles continuous time by discretizing it according to a minimal wait time parameter $\Delta$.
Figure~\ref{fig:ccbs-vs-eicts} shows the success rate of the two algorithms on open $10\times10$ grids with different numbers of agents and $k=2, 3, 4,$ and 5. 
We thank the E-ICTS authors who made their implementation publicly available (\url{https://github.com/nathansttt/hog2}).

The results show that for $k=2$ and $k=3$, \ccbs works better in most cases, while E-ICTS outperforms \ccbs for $k=4$ and $k=5$. The reason for this is that as $k$ increases, more actions conflict with each other, resulting in a significantly larger \ct. Developing pruning techniques for such \ct is a topic for future work. We also compared \ccbs to ICTS over larger dragon age maps. The results where that in most cases E-ICTS solved more instances. 

Note that given an accurate unsafe interval detection mechanism, \ccbs handles continuous time directly, and thus can return better solutions than E-ICTS. However, the unsafe interval detection mechanism we implemented did, in fact, discretize time. 
Also, note that we used different implementations for \ccbs and E-ICTS, and we do not presume to infer from this set of experiments when each algorithm is better. This is an open question even for basic \mapf.  

\commentout{
\begin{figure}
    \centering
    \includegraphics[width=0.35\columnwidth]{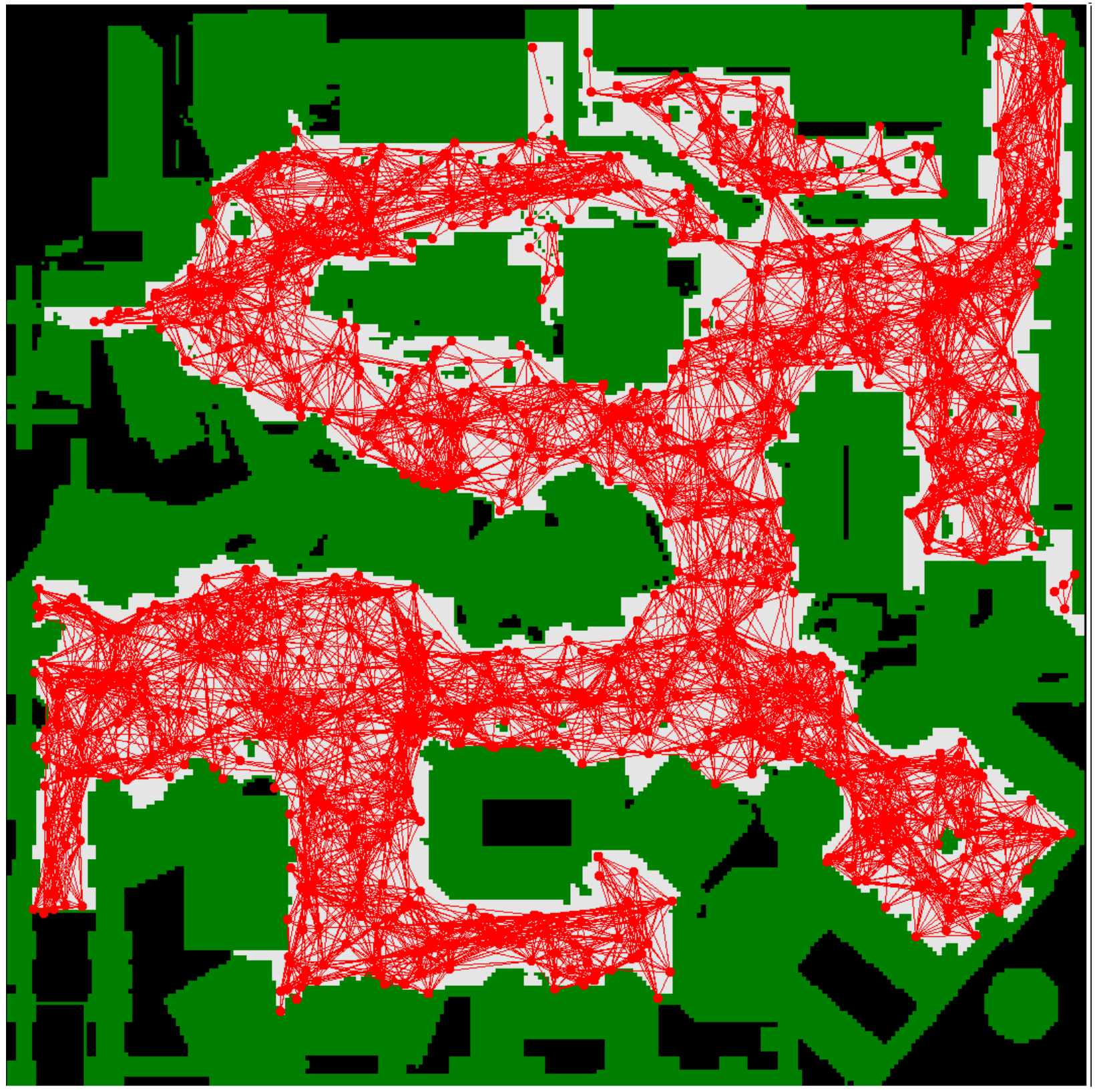}
    \caption{The roadmap created for \texttt{den520d} DAO map.}
    \label{fig:roadmap}
\end{figure}
}
\ccbs is applicable beyond grid domains. To show this, we created a roadmap with 878 vertices and 14,628 edges based on the \texttt{den520d} DAO map using the OMPL library (http://ompl.kavrakilab.org). 
For 250 problems with 10, 15, and 20 agents, the success rate was 0.89, 0.60, and 0.22, with SOC of 1,459, 2,082, and 2,688, respectively. 
In \url{https://tinyurl.com/ccbs-roadmap2} there is an animation showing a solution found by \ccbs for a smaller roadmap.

\vspace{-0.4cm}
\section{Related Work}

AA-SIPP($m$) is an any-angle \ac{MAPF} algorithm based on \sipp that adopts a prioritized planning approach~\cite{yakovlev2017anyAngle}. Ma et al.~\shortcite{ma2019lifelong} also used \sipp in a prioritized planning framework for lifelong \mapf. Both algorithms do not guarantee completeness or optimality. \ac{MCCBS} is a \ac{CBS}-based algorithm for agents with a geometrical shape that may have different configuration spaces~\cite{li2019multi}. They assumed all actions have a unit duration and did not address continuous time. 
\ac{CBS}-CL is a \ac{CBS}-based algorithm designed to handle non-unit edge costs and hierarchy of movement abstractions~\cite{walker2017using}. It  does not allow reasoning about continuous time and does not return provably optimal solutions.
MAPF-POST is a post-processing step that adapts a \ac{MAPF} solution to different action durations that due to kinematic constraints~\cite{honig2017summary}. 
ORCA~\cite{van2005prioritized,snape2011hybrid} 
and ALAN~\cite{godoy2018alan} are fast and distributed \ac{MAPF} algorithms for continuous space. None of these algorithms guarantee optimality.
dRRT* is a sample-based \ac{MAPF} algorithm designed for continuous spaces~\cite{dobson2017scalable}. dRRT* is asymptotically complete and optimal while \ccbs is optimal and complete, and is designed to run over a discrete graph.

Table~\ref{tab:related-work} provides a differential overview of related work on \ac{MAPF} beyond its basic setting. 
Columns ``N.U.'',  ``Cont.'', ``Ang.'', ``Vol.'', ``Opt.'', and ``Dist.'', means support for non-uniform action durations, 
actions with arbitrary continuous duration, 
actions beyond the 4 cardinal moves, agents with a volume (i.e., some geometric shape), 
returns a provably optimal solution, 
and distributed algorithm, respectively. Rows correspond to  different algorithms or family of algorithms.

\vspace{-0.2cm}
\section{Conclusion and Future Work}

We proposed \ccbs, a sound, complete, and optimal \mapf algorithm that supports continuous time, actions with non-uniform duration, and agents and obstacles with a geometric shape. 
\ccbs follows the \ac{CBS} framework, using an adapted version of \sipp as a low-level solver, and unique types of conflicts and constraints in the high-level search. 
To the best of our knowledge, \ccbs is the first \ac{MAPF} algorithm to provide optimality guarantees for such a broad range of \ac{MAPF} settings. 
Experimental evaluation highlighted that conflict detection becomes a bottleneck when solving \mapfr problems. We suggested a hybrid heuristic for reducing this cost. Future work may apply meta-reasoning to decide when and how much to invest in conflict detection. 

\section*{Acknowledgments}
This research is supported by ISF grants no. 210/17 to Roni Stern and by RSF grant no. 16-11-00048 to Konstantin Yakovlev and Anton Andreychuk.

\small
\pagebreak
\bibliographystyle{named}
\bibliography{library} 

\end{document}